\documentclass[conference]{IEEEtran}
\IEEEoverridecommandlockouts
\usepackage[utf8]{inputenc}
\usepackage[T1]{fontenc}

\usepackage{cite}
\usepackage{amsmath,amssymb,amsfonts, amsthm}
\usepackage{algorithmic}
\usepackage{graphicx}
\usepackage{textcomp}
\usepackage{xcolor}
\usepackage{hyperref}
\usepackage{multirow}

\newcommand{\R}{\mathbb{R}} %% reals

\usepackage{bm} % lettre grecque en gras

\newcommand{\bx}{\mathbf{x}}
\newcommand{\bp}{\mathbf{p}}
\newcommand{\bz}{\mathbf{z}}

\newcommand{\calD}{\mathcal{D}}

\usepackage{graphicx}

\newtheorem{obs}{Observation}

\newtheorem{pro}{Proposition}

\def\BibTeX{{\rm B\kern-.05em{\sc i\kern-.025em b}\kern-.08em
		T\kern-.1667em\lower.7ex\hbox{E}\kern-.125emX}}

\usepackage{quantikz}
\usepackage{comment}
\usepackage{orcidlink}
\begin{document}
	
	\title{Hybrid Variational Quantum Circuits for Multivariate Regression and High-Dimensional Data Reconstruction\\
	
		\thanks{This work was supported by the Agence Française de Développement (AFD) through the Agence Nationale de la Recherche (ANR-21-PEA2-0007), under the Partenariats with "l’Enseignement supérieur
Africain" (PEA) program.}
	}

\author{
\IEEEauthorblockN{
 Koffi O. AYENA
}
\IEEEauthorblockA{
ICB, UTBM \\
%Laboratoire Interdisciplinaire Carnot de Bourgogne \\
%Université de Belfort de Montbéliard\\
F-90000 Belfort, France \\
LAMMA, Universite de Lomé\\
01 BP 1515 Lomé-Togo\\
 ORCID: \href{https://orcid.org/0009-0009-0858-258X}{0009-0009-0858-258X}\\
}
\and

\IEEEauthorblockN{
 Frédéric HOLWECK 
}
\IEEEauthorblockA{
ICB, UTBM \\
F-90000 Belfort, France \\
frederic.holweck@utbm.fr
}
\and

\IEEEauthorblockN{
Serge IOVLEFF 
}
\IEEEauthorblockA{
SINERGIES (UR 4662), UMLP\\
F-90000 Belfort, France \\
serge.iovleff@utbm.fr
}
\and
\IEEEauthorblockN{
Amah S. D'ALMEIDA 
}
\IEEEauthorblockA{
LAMMA, Universite de Lomé\\
01 BP 1515 Lomé-Togo \\
dal\_me@yahoo.fr
}
}

	\maketitle
	
	\begin{abstract}

Variational quantum circuits (VQCs) are parameterized quantum circuits 
optimized classically. We propose a hybrid variational quantum circuit 
(HVQC) extending VQCs with a classical affine post-measurement layer, 
enabling vector-valued regression without the linear overhead of 
independent scalar circuits. Theoretically, we show that elementary 
one- and two-qubit circuits can approximate quadratic functions and 
products via data re-uploading and entanglement, providing the 
foundations of the full architecture. Experimentally, on two synthetic 
image reconstruction datasets and the Friedman1 benchmark (40,568 test 
samples), our HVQC matches Gaussian Process Regression and outperforms 
XGBoost and Random Forest. An ablation study confirms that both quantum 
and classical components are essential, and results highlight the 
central role of the feature map in hybrid quantum-classical models.

	\end{abstract}
	
	\begin{IEEEkeywords}
		Variational quantum circuits, quantum machine learning, multivariate regression, feature maps
	\end{IEEEkeywords}
	
\section{Introduction}
Quantum computing has experienced significant advances in the era of noisy intermediate-scale quantum (NISQ) processors \cite{preskill2018quantum}. These processors, although limited in the number of qubits and subject to noise, have paved the way for new computing paradigms, including variational quantum algorithms (VQAs). Popularized by the foundational work of Peruzzo \cite{peruzzo2014variational} on the variational solver for quantum chemistry, variational quantum circuits (VQCs) have become central models for exploring the potential advantages of quantum computing in various fields, notably quantum chemistry and, more recently, quantum machine learning (QML)~\cite{biamonte2017quantum, cerezo2021variational}.

The enthusiasm for variational quantum circuit  (HVQC) stems from their hybrid architecture: a parameterized quantum circuit, whose parameters are optimized by a classical computer. This approach partially circumvents the limitations of current quantum hardware. Formally, a VQC can be described as a parameterized unitary $U(\bz; \bm{\theta})$ applied to an initial state $\ket{\mathbf{0}}=\lvert 0 \rangle^{\otimes^{m_0}}$, producing a final state whose expectation value of an observable $\hat{O}$ provides the model's output \cite{benedetti2019parameterized}:
\begin{equation}
	f(\bz; \bm{\theta}) = \bra{\mathbf{0}} U^{\dagger}(\bz; \bm{\theta}) \hat{O} U(\bz; \bm{\theta}) \ket{\mathbf{0}}
	\label{eq:vqc}
\end{equation}
where $\mathbf{z} \in \mathbb{R}^d$ represents the input data, typically encoded via parameterized rotation gates \cite{schuld2021effect}.

In the QML landscape, while classification tasks \cite{schuld2019quantum, perez2020data, schuld2020circuit}, and "unpublished" \cite{lloyd2020quantum} have been extensively studied, \textit{quantum regression} has remained a theoretically less explored topic until recently, as highlighted by authors in \cite{garate2024variational} on applying QML to practical regression on NISQ hardware. Yet, the ability to predict continuous values is crucial for numerous scientific and industrial applications, including financial time series forecasting \cite{orus2019quantum}, physical data modeling \cite{ciliberto2018quantum}, or climate prediction \cite{da2026exploring}.

Research on quantum regression circuits has experienced a remarkable acceleration since 2024, marked by several important milestones which we organize chronologically.

The year 2024 constitutes a turning point with the first significant experimental demonstrations. A pioneering study \cite{garate2024variational} applied variational quantum regression to the Auto-MPG dataset on NISQ hardware with error mitigation. Their results demonstrate that VQAs can outperform classical models like XGBoost, and that error mitigation techniques are effective in bringing the performance of noisy simulators closer to that of ideal simulators.

Concurrently, the PennyLane platform published a tutorial demonstrating multidimensional regression with a two-qubit variational circuit to approximate the function $f(x_1,x_2)=\frac{1}{2}(x_1^2+x_2^2)$, achieving an $R^2$ score of 0.983 in "unpublished" \cite{JorgeGorkaMartinezdeLejarza2024}. This work illustrates the ability of VQCs to build partial Fourier series for function approximation, consistent with the VQC expressivity theory established in \cite{schuld2021effect}.

Reference \cite{senapati2024pqml} introduced the PQML (Predictive Quantum Machine Learning) tool to predict the reproducibility of results across different quantum machines, a crucial advance for the reliability of quantum regression applications in a heterogeneous NISQ context.

The year 2025 saw the emergence of sophisticated optimization techniques and applications to complex problems. In \cite{perkkola2025optimizing}, the authors proposed a novel state preparation method for variational quantum regression, using optimization techniques based on the ZX-calculus (Pauli pushing, phase folding, Hadamard pushing). Their results demonstrate that these optimizations enable the successful execution of quantum regression algorithms on current hardware, significantly reducing circuit depth.

This approach was extended to multivariate time series in "unpublished"\cite{hamhoum2025multivariate} through the MTS-QRC (Multivariate Time Series Quantum Reservoir Computing) framework.

Applied to Lorenz and ENSO (El Niño-Southern Oscillation) systems, this method achieved a mean squared error (MSE) of $0.0087$ and $0.0036$, respectively. Interestingly, their work revealed that hardware noise can sometimes act as an implicit regularizer, improving performance compared to ideal simulators a counter-intuitive yet promising phenomenon for NISQ applications.

Parallel advances in error mitigation directly benefited regression applications. The authors of \cite{czarnik2021error} proposed significant improvements to Clifford Data Regression (CDR) with Energy Sampling (ES) and Non-Clifford Extrapolation (NCE), enhancing the fidelity of computations on noisy hardware without additional quantum overhead.

Automatic design of quantum architectures for regression using genetic algorithms was explored in "unpublished" \cite{neto2025regression}. Their \textit{Reduced Regressor QNN} framework explores circuit depth, configuration of parameterized gates, and data re-uploading patterns, demonstrating that these evolved circuits, although compact, can achieve competitive performance against 17 classical regression models on 22 non-linear benchmark functions.

The year 2026 marks a consolidation of the field. Joo's editorial \cite{joo2026advancing} in \textit{Frontiers in Physics} reviews the progress in algorithm optimization and error mitigation, confirming that these areas are now mature enough to support practical applications such as quantum regression. The challenges are progressively shifting from fundamental feasibility towards comparative efficiency and demonstrable quantum advantage.

Although variational quantum circuits (VQCs) have shown promise for scalar regression, they suffer from the barren plateau phenomenon \cite{mcclean2018barren, cunningham2025investigating} that makes training difficult, and their naive extension to vector-valued regression incurs a prohibitive linear complexity in the output dimension. To address these limitations, we introduce a hybrid variational quantum circuit (HVQC) that handles vector-valued regression in a single unified circuit, thereby avoiding the linear overhead of independent approaches \cite{garate2024variational, JorgeGorkaMartinezdeLejarza2024}. Unlike quantum kernel methods \cite{havlivcek2019supervised, schuld2021supervised} where the circuit is fixed, our architecture jointly optimizes all quantum and classical parameters end-to-end, with the final affine layer enabling the model to reach any target in the output vector space.
%\subsection{Scientific Challenges and Current Limitations}
%While VQCs have shown promise for scalar regression ($\mathbb{R}^d \rightarrow \mathbb{R}$), several fundamental challenges persist. The barren plateau phenomenon, studied in \cite{mcclean2018barren, cunningham2025investigating}, stands as the primary obstacle to training deep VQCs: under conditions of high expressivity or excessive depth, the gradient variance decays exponentially with the number of qubits, effectively preventing gradient-based learning. Understanding and mitigating this phenomenon in regression tasks remains an open problem. This challenge becomes especially critical for vector-valued regression ($\mathbb{R}^d \rightarrow \mathbb{R}^m$), where naive strategies based on assembling independent scalar circuits yield linear complexity in $m$, resulting in a prohibitive cost under limited quantum resources.

%To address these limitations, we introduce a hybrid variational quantum circuit for vector-valued regression. The proposed architecture leverages fundamental building blocks, namely $R_Y$ rotations and entanglement patterns to harness the expressivity of shallow circuits.

This paper is organized as follows. Section~\ref{sec2} presents the mathematical foundations of HVQCs. In Section~\ref{sec3}, we demonstrate the approximation capabilities of elementary one- and two-qubit circuits, thereby providing the theoretical foundations for our architecture.
Complete HVQC architecture is established in Section~\ref{sec4}, where we define the key components: data encoding, and the variational parameterization that enables learning. 
Section~\ref{sec5} presents experimental results comparing the performance of our HVQC, evaluated using the $R^2$ score and mean squared error (MSE), on two simulated datasets with different feature maps, against several classical regression models, including Gaussian process regression (GPR), random forest regression (RFR),  multi-output XGBoost Regression (XGB), and two classical neural networks. The section concludes with a global analysis of the quantum states after measurement, revealing an implicit clustering behavior.
\section{Theoretical framework of a HVQC}
\label{sec2}
We consider the following supervised learning problem: given a training set $\calD = \{(\bz^{(i)}, \bx^{(i)})\}_{i=1}^N$, where $\bz^{(i)} \in \R^d$ and $\bx^{(i)} \in \R^m$, the objective is to learn a function $f_{\bm{\Theta}}: \R^d \rightarrow \R^m$, parameterized by $\bm{\Theta}$, minimizing the mean squared error:
\begin{equation}
\label{eq2}
    \mathcal{L}(\bm{\Theta}) = \frac{1}{N} \sum_{i=1}^N \| f_{\bm{\Theta}}(\bz^{(i)}) - \bx^{(i)} \|_2^2. 
\end{equation}

\paragraph{Baseline architecture of the quantum regressor}
The conventional architecture of the variational quantum regressor  proceeds in three steps. First, a quantum feature map embeds the input vector $\bz$ into the Hilbert space of a system of $m_0 \geq \lceil \log_2 m\rceil$ qubits, producing the state $\ket{\phi(\bz)} \in \mathcal{H} \cong (\mathbb{C}^2)^{\otimes m_0}$. In second step, this encoding, whose dimensionality is chosen to match the target dimension $m$, is then evolved by a parameterized quantum circuit $U(\bm{\theta})$, generating the variational state:
\begin{equation}
    \ket{\psi(\bz;\bm{\theta})} = \phi(\bz)U(\bm{\theta}^{L})\cdots \phi(\bz)U(\bm{\theta}^{1}) \phi(\bz)\ket{0}^{\otimes m_0}. \label{eq:variational_state}
\end{equation}

The model's prediction emerges from measuring the $m_0$ qubits in the computational basis $\{|k\rangle\}_{k=0}^{2^{m_0}-1}$. The theoretical probability distribution of outcomes is given by the projectors $M_k = |k\rangle\langle k|$:
\begin{equation}
    p_k(\bz; \bm{\theta}) = \langle \psi(\bz; \bm{\theta}) | M_k | \psi(\bz; \bm{\theta}) \rangle. \label{eq:quantum_probs}
\end{equation}
This distribution $\bp(\bz; \bm{\theta}) = (p_0, \ldots, p_{2^{m_0}-1})$ resides in the probability simplex $\Delta^{2^{m_0}-1}$, defined by:
\begin{align*}
    &\Delta^{2^{m_0}-1} =\\ 
    &\left\{ (q_0,q_1,\cdots,q_{2^{m_0}-1} )\in \mathbb{R}^{2^{m_0}} \mid q_k \geq 0, \sum_{k=0}^{2^{m_0}-1} q_k = 1 \right\}. \label{eq:simplex}
\end{align*}
In practice, access to this distribution is obtained through sampling (shots). For $S$ measurement repetitions, one obtains a frequentist estimate $\hat{p}_k(\bm{\theta}, \bz) = c_k/S$, where $c_k$ is the count of outcome $k$.

Each projector $M_k = |k\rangle\langle k|$ defines an observable whose measurement yields the probability of observing the computational basis state $|k\rangle$. It can be expressed as a tensor product of single-qubit observables

\[
M_k = \bigotimes_{i=1}^{m_0} \frac{1}{2}\Big(I + (-1)^{k_i} \sigma_z\Big), \qquad \text{with} \quad \sigma_z= \begin{pmatrix} 1 & 0 \\ 0 & -1 \end{pmatrix}
\]
where $k_i \in \{0,1\}$ denotes the $i$-th bit of the integer $k$ on $m_0$ bits, $\sigma_z$ is the Pauli $Z$  matrix and $I$ denotes the  identity matrix.

In the third step, we perform post-processing to handle model limitations and ensure that the output dimensions are consistent. This final step is discussed in the following two paragraphs.

\paragraph{Geometric limitation of the baseline model}
A fundamental limitation of the architecture defined by equations \eqref{eq:variational_state} and \eqref{eq:quantum_probs} lies in the confinement of its output to the probabilistic simplex $\Delta^{2^{m_0}-1}$. Although a trivial linear post-processing could theoretically project this output onto $\mathbb{R}^m$, the \textit{internal geometry} of the learned representations remains that of a convex polytope with extremal properties — its points being convex combinations of computational basis states. This geometric constraint intrinsically limits the model's capacity to capture data structures exhibiting a different underlying geometry (unbounded, non-trivial topology), necessitating an appropriate preprocessing of input data.

\paragraph{Extension via post-variational affine transformation}
To overcome this limitation while preserving the model's differentiability, we propose incorporating a learnable affine transformation of the probability distribution $\bp(\bm{\theta}, \bz)$, like "unpublished" \cite{wilson2018quantum}. The complete regression function is then written as:
\begin{equation}
    f_{\bm{\Theta}}(\bz) = \mathbf{W} \, \bp(\bm{\theta}, \bz) + \bm{b}, \label{eq:affine_vqr}
\end{equation}
where $\bm{\Theta} = \{\bm{\theta}, \mathbf{W}, \bm{b}\}$ encompasses all parameters. The matrix $\mathbf{W} \in \mathbb{R}^{m \times 2^{m_0}}$ and the bias vector $\bm{b} \in \mathbb{R}^m$ perform a fundamental geometric transformation: they allow the model to reach any point in $\mathbb{R}^m$ by learning a linear map from the simplex to the target space. This formulation breaks the simplexial geometry of internal representations and offers a clear geometric interpretation, conferring upon the model increased flexibility to adapt to complex data patterns.

To better understand the capabilities of our model, we now turn to the expressivity of HVQCs, focusing on their ability to approximate non-linear functions.

\section{Expressivity of VQCs}
\label{sec3}
%In this section, we demonstrate the capacity of small one- or two-qubit quantum circuits to approximate non-linearity.

We demonstrate here the capacity of elementary one- and two-qubit quantum circuits to approximate fundamental nonlinearities, thereby establishing the theoretical basis for our architecture.
We validate Propositions \ref{pro1} and \ref{pro2} via statevector simulation using PennyLane, neglecting shot noise in the probability estimation. 
\begin{pro}[Approximation of the square function]
	\label{pro1}
	Let $K =[-2, 2]$. For any $\varepsilon \in (0,1)$, there exist $\beta < 1$ and $k\geq 1$, such that the measurement of the quantum state $\ket{\psi(z)}=R_Y(z)\ket{0}$ in the computational basis satisfies
	\begin{equation}
    \label{eq:6}
		\sup_{z\in K} \left| z^2 - 4\beta^{-2k}P\left(\ket{\psi(\beta^k z)}=\ket{1}\right) \right| \leq \varepsilon
	\end{equation}
\end{pro}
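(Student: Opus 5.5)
The plan is to compute the measurement probability in closed form and then reduce the claim to an elementary Taylor estimate for $\sin^2$. With the standard convention $R_Y(\theta)=\exp(-i\theta\sigma_y/2)$, we have
\[
R_Y(\theta)\ket{0}=\cos(\theta/2)\ket{0}+\sin(\theta/2)\ket{1},
\]
so $P\left(\ket{\psi(\beta^k z)}=\ket{1}\right)=\sin^2(\beta^k z/2)$. Writing $t=\beta^k z/2$, the quantity inside the supremum becomes
\[
z^2-4\beta^{-2k}\sin^2(\beta^k z/2)=4\beta^{-2k}\bigl(t^2-\sin^2 t\bigr).
\]
The task is therefore to bound $t^2-\sin^2 t$ uniformly for $|t|\le \beta^k$, since $|z|\le 2$.

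For the key estimate I would use $0\le |t|-|\sin t|\le |t|^3/6$ together with $|t|+|\sin t|\le 2|t|$. These give
\[
0\le t^2-\sin^2 t=(|t|-|\sin t|)(|t|+|\sin t|)\le |t|^4/3.
\]
Consequently,
\[
\sup_{z\in K}\left|z^2-4\beta^{-2k}\sin^2(\beta^k z/2)\right|\le 4\beta^{-2k}\cdot\frac{(\beta^k z/2)^4}{3}\le \frac{\beta^{2k}z^4}{12}\le \frac{4}{3}\beta^{2k}.
\]

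It remains to choose the parameters. Take any $\beta\in(0,1)$, for example $\beta=1/2$, and choose $k\ge 1$ large enough that $\beta^{2k}\le 3\varepsilon/4$, that is, $k\ge \log(3\varepsilon/4)/(2\log\beta)$. This yields the bound $\varepsilon$ in \eqref{eq:6}. The statement only says $\beta<1$, so $\beta$ must also be taken positive; the bound in fact holds for any $\beta$ with $0<|\beta|<1$.

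There is no real obstacle in this argument. The only points needing care are the rotation-angle convention, which fixes the factor $4$ and the half-angle (a different convention would rescale constants), and using the uniform bound $|z|\le 2$ on $K$ so that the estimate is a supremum over $K$ rather than a pointwise one.
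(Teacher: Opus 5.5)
Your proof is correct and takes essentially the same route as the paper: you compute $P=\sin^2(\beta^k z/2)$ explicitly, bound the quartic remainder, and reach the same uniform bound $\frac{4}{3}\beta^{2k}$ on $K$. The paper gets $|\sin^2(u/2)-u^2/4|\le |u|^4/48$ from Taylor's theorem with Lagrange remainder, whereas you factor $t^2-\sin^2 t$ and use $|t|-|\sin t|\le |t|^3/6$, which gives the same constant; your requirement $0<\beta<1$ also corrects the paper's closing ``$\beta\le 1$'', under which $\beta=1$ would not make the bound small.
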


\begin{proof}
	We have the quantum state $\ket{\psi(z)} = R_y(z)\ket{0}$. Computing explicitly, we obtain:
	\[
	\ket{\psi(z)} = \begin{pmatrix} 
		\cos\frac{z}{2}  \\
		\sin\frac{z}{2}
	\end{pmatrix}
	\]
	The probability of measuring $\ket{1}$ is:
	\begin{align}
		P\left(\ket{\psi(z)}=\ket{1}\right) &= \left|\sin\frac{z}{2}\right|^2
	\end{align}
	Hence,
	\[
	4\beta^{-2k}P\left(\ket{\psi(\beta^k z)}=\ket{1}\right) = 4\beta^{-2k}\sin^2\frac{\beta^k z}{2}
	\]
	
	For small $u$, we have the Taylor expansion $\sin^2\frac{u}{2} = \frac{u^2}{4} - \frac{u^4}{48} + O(u^6)$. Here $u = \beta^k z$ is small for large $k$. Thus:
	\[
	4\beta^{-2k}\sin^2\frac{\beta^k z}{2} = 4\beta^{-2k}\left(\frac{\beta^{2k}z^2}{4} - \frac{\beta^{4k}z^4}{48} + O(\beta^{6k}z^6)\right)
	\]
	\[
	= z^2 - \frac{\beta^{2k}z^4}{12} + O(\beta^{4k}z^6)
	\]
	
	By Taylor's theorem with the Lagrange remainder, for any $u \in \mathbb{R}$, we have
	\[
	\left|\sin^2\frac{u}{2} - \frac{u^2}{4}\right| \leq \frac{|u|^4}{48}
	\]
	
	Applied to $u = \beta^k z$:
	\[
	\left|4\beta^{-2k}\sin^2\frac{\beta^k z}{2} - z^2\right| \leq 4\beta^{-2k}\frac{|\beta^k z|^4}{48} = \frac{\beta^{2k}|z|^4}{12}
	\]
	
	Then for all $z \in K$:
	\[
	\left|4\beta^{-2k}P\left(\ket{\psi(\beta^k z)}=\ket{1}\right) - z^2\right| \leq \frac{4\beta^{2k}}{3}
	\]
	
	For sufficiently large $k$, $\beta^{2k}$ becomes arbitrarily small (since $\beta < 1$), so this difference tends to 0 uniformly on $K$.
	
	Thus, for any $\varepsilon \in (0,1)$, it suffices to choose $\beta \leq 1$ and $k$ large enough such that $\frac{4\beta^{2k}}{3} \leq \varepsilon$, which yields:
	\[
	\sup_{z\in K} \left| z^2 - 4\beta^{-2k}P\left(\ket{\psi(\beta^k z)}=\ket{1}\right) \right| \leq \varepsilon
	\]
\end{proof}
\begin{figure}[htbp]
	\centerline{\includegraphics[scale=0.36]{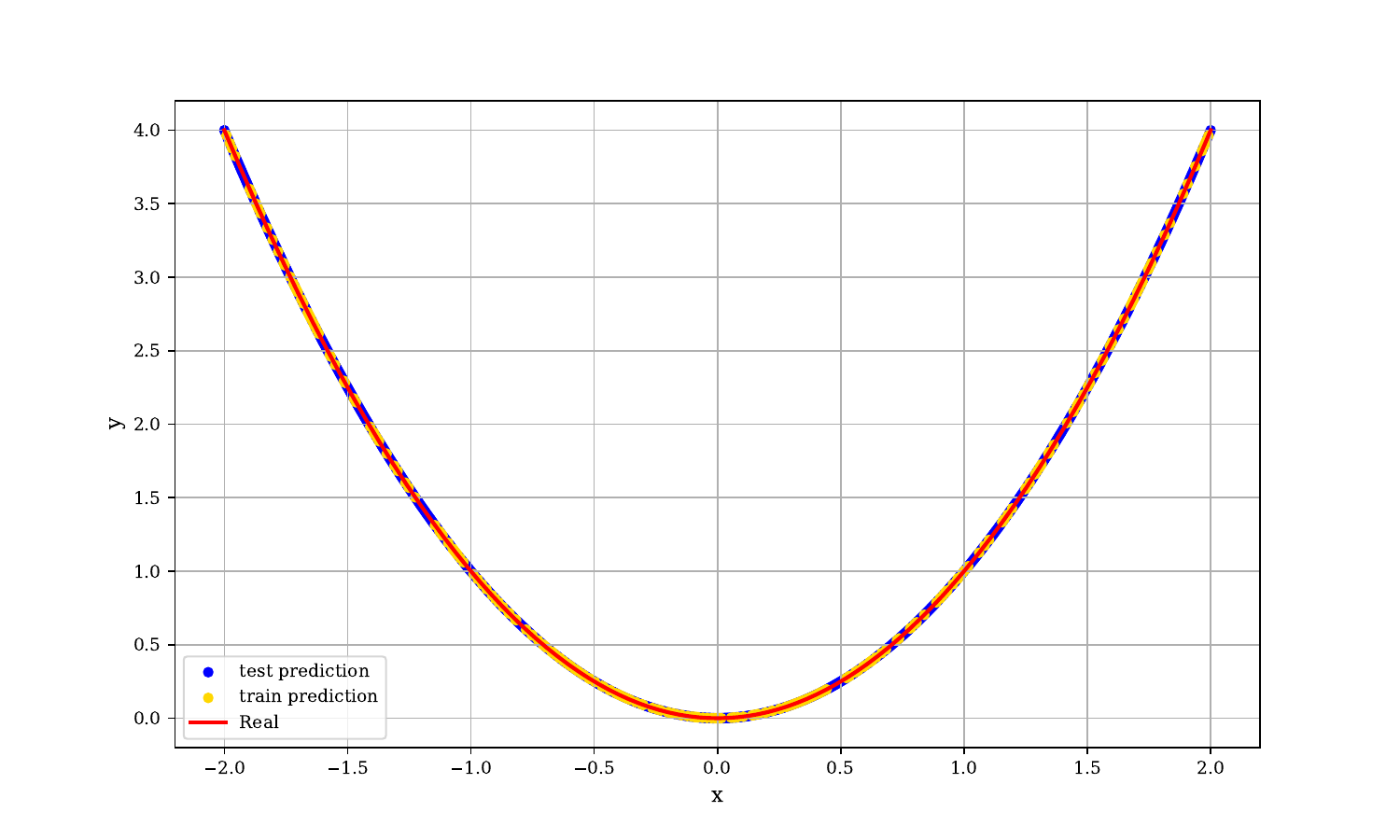}}
	\caption{Square function approximation.}
	\label{fig1}
\end{figure}
The approximation of the function $x^2$ by the HVQC defined in Proposition~\ref{pro1} achieves an R$^2$ score of 1 on the training set and 1 on the test set, using 900 points split in a 70\%–30\% proportion, as illustrated in Fig.~\ref{fig1}.
Using the identity
$$x\times y=\frac{1}{4} \left[(x+y)^2-(x-y)^2\right]$$
combined with the square function approximation obtained in Proposition~\ref{pro1}, we can then approximate the product $x \times y$ using two $R_Y$ rotation gates applied to two qubits.

\begin{pro}[Quantum approximation of the product]
	\label{pro2}
	For any $\varepsilon \in (0,1)$, there exist parameters $\beta < 1$, $k \geq 1$, and a quantum circuit acting on a 2-qubit system such that measuring a quantum state yields an approximation $\widetilde{xy}$ satisfying:
	
	\begin{equation}
		\sup_{(x,y)\in [-1, 1]^2} \left| x \times y - \widetilde{xy} \right| \leq \varepsilon
	\end{equation}
	
	More precisely, the approximation is given by:
	
	\begin{align}
    \nonumber
		&\widetilde{xy} =
		 \beta^{-2k}\times\\ 
		 &\left[ P\left(\ket{\psi(\beta^k(x+y))}=\ket{1}\right) - P\left(\ket{\psi(\beta^k(x-y))}=\ket{1}\right) \right]
		\label{eq:product_approx}
	\end{align}
	
	where $\ket{\psi(z)} = R_Y(z)\ket{0}$ is the quantum state defined in Proposition~\ref{pro1}.
\end{pro}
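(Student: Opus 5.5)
The plan is to reduce the product to two squares through the identity $xy=\frac14[(x+y)^2-(x-y)^2]$ and then apply the uniform bound from the proof of Proposition~\ref{pro1} to each square separately. First fix the circuit. It acts on two qubits, each initialised in $\ket{0}$. Qubit~1 receives the gate $R_Y(\beta^k(x+y))$ and qubit~2 receives $R_Y(\beta^k(x-y))$. The product state is then $\ket{\psi(\beta^k(x+y))}\otimes\ket{\psi(\beta^k(x-y))}$. The probability of reading $\ket{1}$ on qubit~1 is the marginal $p_{10}+p_{11}$ of the computational-basis distribution, which is exactly $P(\ket{\psi(\beta^k(x+y))}=\ket{1})$. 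Likewise, $p_{01}+p_{11}$ gives the second probability. Hence $\widetilde{xy}$ in \eqref{eq:product_approx} is a fixed linear combination of the measured probabilities $p_k$. This also shows that it fits the affine post-processing \eqref{eq:affine_vqr}.

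Next, control the error. Set $u=x+y$ and $v=x-y$. For $(x,y)\in[-1,1]^2$, both $u$ and $v$ lie in $K=[-2,2]$. The proof of Proposition~\ref{pro1} establishes the explicit uniform bound
\[
\left|4\beta^{-2k}P\left(\ket{\psi(\beta^k z)}=\ket{1}\right)-z^2\right|\le \tfrac{\beta^{2k}z^4}{12}\le \tfrac{4\beta^{2k}}{3}
\]
for all $z\in K$. Multiplying \eqref{eq:product_approx} out gives
\[
\widetilde{xy}=\tfrac14\left[4\beta^{-2k}P(\cdot\,;u)-4\beta^{-2k}P(\cdot\,;v)\right].
\]
By the triangle inequality,
\[
|xy-\widetilde{xy}|\le \tfrac14\left(\tfrac{4\beta^{2k}}{3}+\tfrac{4\beta^{2k}}{3}\right)=\tfrac{2\beta^{2k}}{3}.
\]
This bound can be sharpened using $u^4+v^4\le 32$ on the square, since $u^4+v^4$ is convex and attains its maximum at the corners. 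That is not needed, however.

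Finally, fix any $\beta\in(0,1)$, say $\beta=1/2$. Choose $k\ge1$ large enough that $\frac{2\beta^{2k}}{3}\le\varepsilon$, which is possible because $\beta^{2k}\to0$. Taking the supremum over $(x,y)\in[-1,1]^2$ then completes the argument.

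The main obstacle is not analytic, since the estimate is inherited directly from Proposition~\ref{pro1}. It is bookkeeping. We must check that the prefactor $\beta^{-2k}$ in \eqref{eq:product_approx} matches $\frac14\cdot4\beta^{-2k}$, and that the range $[-1,1]^2$ maps into $K$ so the earlier bound applies. We must also justify that the two single-qubit probabilities really are obtainable as marginals from one measurement of the 2-qubit state. The tensor-product structure settles this last point.
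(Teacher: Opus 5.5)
Your proof is correct and follows the paper's argument: the polarization identity, the uniform bound $4\beta^{2k}/3$ from Proposition~\ref{pro1} on $K=[-2,2]$, and the triangle inequality giving $2\beta^{2k}/3$; your explicit choice $\beta\in(0,1)$ and your marginal-probability description of the two-qubit circuit are harmless clarifications. One small slip in your aside: on $[-1,1]^2$ you have $u^4+v^4=2(x^4+6x^2y^2+y^4)\le 16$, and using the bound $32$ just reproduces $2\beta^{2k}/3$ exactly, so the genuine sharpening is $\beta^{2k}/3$.
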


\begin{proof}
	From Proposition~\ref{pro1}, for any $z \in K$, we have:
	
	\begin{equation}
		\left| z^2 - 4\beta^{-2k}P\left(\ket{\psi(\beta^k z)}=\ket{1}\right) \right| \leq \frac{4\beta^{2k}}{3}
	\end{equation}
	
	Since $x+y, x-y \in K$, and using the remarkable identity:
	
	\begin{equation}
		x \times y = \frac{1}{4}\left[(x+y)^2 - (x-y)^2\right]
	\end{equation}
	The approximation error follows from the triangle inequality:
	
	\begin{align}
    \nonumber
		&\left| x \times y - \widetilde{xy} \right|\\
		\nonumber
        &\leq \frac{1}{4}\Big[ \left| (x+y)^2 - 4\beta^{-2k}P\left(\ket{\psi(\beta^k(x+y))}=\ket{1}\right) \right| \\
		\nonumber
        &\quad + \left| (x-y)^2 - 4\beta^{-2k}P\left(\ket{\psi(\beta^k(x-y))}=\ket{1}\right) \right| \Big] \\
		&\leq \frac{1}{4}\left[ \frac{4\beta^{2k}}{3} + \frac{4\beta^{2k}}{3} \right] = \frac{2\beta^{2k}}{3}
	\end{align}
	
	For any $\varepsilon \in (0,1)$, it therefore suffices to choose $\beta < 1$ and $k$ sufficiently large such that:
	
	\begin{equation}
		\frac{2\beta^{2k}}{3} \leq \varepsilon
	\end{equation}
\end{proof}
The corresponding quantum circuit can be implemented by preparing the states $\ket{\psi(\beta^k(x+y))}$ and $\ket{\psi(\beta^k(x-y))}$ on two distinct qubits, which allows obtaining both probabilities in execution of the circuit.

On four qubits, Proposition~\ref{pro2} with CNOT gates can be rewritten as:
\begin{equation}
\label{eq:14}
    \begin{aligned}
    & \bigl(R_Y(+\pi/2) \otimes I \otimes R_Y(-\pi/2) \otimes I\bigr) \ \circ \bigl(\mathrm{CNOT}_{0, 1}\\
    &\otimes \mathrm{CNOT}_{2, 3}\bigr) 
    \ \circ \bigl(R_Y(\beta^k x)^{\otimes 2} \otimes R_Y(\beta^k y)^{\otimes 2}\bigr) 
    \ |0000\rangle,
    \end{aligned}
\end{equation}
with
    $
        CNOT_{i,j}|q_i q_j\rangle
	 \rightarrow 
	|q_i , (q_j \oplus q_i)\rangle$ and 
    $\oplus$ means addition modulo 2 (classical XOR).
And denoting by $P_{01}$ and $P_{23}$ the probabilities of measuring $|11\rangle$ 
on qubits $(0,1)$ and $(2,3)$ respectively, $\frac{2}{\beta^{2k}}\bigl(P_{01} - P_{23}\bigr)$ approximates $xy$.
    We further observe the following
\begin{obs}[Approximations realized by the two-qubit circuit]
	\label{obs}
	Consider the quantum circuit $\mathcal{C}(\alpha)$ acting on two qubits defined by:
	
	\begin{align}
    \nonumber
		&\mathcal{C}(\alpha) : \ket{00} \mapsto \ket{\psi(x,y,\alpha)} =\\
		& CNOT_{0, 1} \cdot(I \otimes R_Y(\alpha)) \cdot  (R_Y(x) \otimes R_Y(y)) \ket{00}
		\label{eq:two_qubit_circuit}
	\end{align}
    For small values of $x$ and $y$ (in the neighborhood of 0), the measurement probabilities in the computational basis realize the following approximations:
	
	\begin{align}
		P_{00}(x,y,0)= P_{01}(x,y,\pi) &\approx 1-\frac{x^2}{4} -\frac{y^2}{4}+\frac{x^2 y^2}{16}\\
		P_{01}(x,y,0) = P_{00}(x,y,\pi) &\approx \frac{y^2}{4} -\frac{x^2 y^2}{16} \\
		P_{10}(x,y,0) =P_{11}(x,y,\pi) &\approx \frac{x^2 y^2}{16} \\
		P_{11}(x,y,0) =P_{10}(x,y,\pi)&\approx \frac{x^2}{4}-\frac{x^2 y^2}{16}
		\label{eq:prob_approximations}
	\end{align}
	where $P_{ij}(x,y,\alpha)$ is the probability of measuring the state $\ket{ij}$.
\end{obs}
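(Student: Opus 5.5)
The plan is to compute the output state of $\mathcal{C}(\alpha)$ exactly, read off the four probabilities as products of single-qubit $\cos^2$ and $\sin^2$ factors, and then expand these to second order as in Proposition~\ref{pro1}. The first observation is that $R_Y(\alpha)R_Y(y)=R_Y(y+\alpha)$. Writing $c_x=\cos\frac{x}{2}$, $s_x=\sin\frac{x}{2}$, $c'=\cos\frac{y+\alpha}{2}$ and $s'=\sin\frac{y+\alpha}{2}$, the state before the entangling gate is the product state
\[
(c_x\ket{0}+s_x\ket{1})\otimes(c'\ket{0}+s'\ket{1}).
\]
Applying $CNOT_{0,1}$ flips the second qubit exactly when the first is $\ket{1}$, which gives
\[
\ket{\psi(x,y,\alpha)}=c_xc'\ket{00}+c_xs'\ket{01}+s_xs'\ket{10}+s_xc'\ket{11}.
\]
Hence $P_{00}=c_x^2c'^2$, $P_{01}=c_x^2s'^2$, $P_{10}=s_x^2s'^2$ and $P_{11}=s_x^2c'^2$, all exact.

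Next I would treat the two values of $\alpha$. For $\alpha=0$ we have $c'=\cos\frac y2$ and $s'=\sin\frac y2$. For $\alpha=\pi$ we get $\cos\frac{y+\pi}{2}=-\sin\frac y2$ and $\sin\frac{y+\pi}{2}=\cos\frac y2$. Since only squares appear, the sign is irrelevant and the roles of $c'^2$ and $s'^2$ are simply exchanged. This gives the exact identities $P_{00}(x,y,0)=P_{01}(x,y,\pi)$, $P_{01}(x,y,0)=P_{00}(x,y,\pi)$, $P_{10}(x,y,0)=P_{11}(x,y,\pi)$ and $P_{11}(x,y,0)=P_{10}(x,y,\pi)$. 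So the equalities in the statement hold without any approximation, and only the $\alpha=0$ column needs to be expanded.

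For the expansion I would reuse the bound from the proof of Proposition~\ref{pro1}, namely $\sin^2\frac u2=\frac{u^2}{4}+O(u^4)$ and therefore $\cos^2\frac u2=1-\frac{u^2}{4}+O(u^4)$. Substituting these into the four products gives, for example,
\[
P_{00}(x,y,0)=\Bigl(1-\tfrac{x^2}{4}\Bigr)\Bigl(1-\tfrac{y^2}{4}\Bigr)+\text{remainder}=1-\tfrac{x^2}{4}-\tfrac{y^2}{4}+\tfrac{x^2y^2}{16}+\text{remainder}.
\]
The other three probabilities are handled the same way, and multiplying out the second-order factors yields exactly the stated polynomials.

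The main obstacle is making the symbol $\approx$ precise. The cross term $\frac{x^2y^2}{16}$ is of the same total degree as the neglected quartic terms (for instance $-x^4/48$ and $-y^4/48$). So the statement cannot mean a consistent fourth-order Taylor polynomial. I would state it instead as the product of the second-order single-variable expansions, with a remainder bounded by $C(x^4+y^4)$ for $|x|,|y|\le 1$. The constant $C$ would come from the Lagrange bound of Proposition~\ref{pro1} together with the fact that every factor lies in $[0,1]$.

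The claim $P_{10}\approx\frac{x^2y^2}{16}$ needs no such caveat. There $s_x^2s'^2=\frac{x^2y^2}{16}\bigl(1+O(x^2+y^2)\bigr)$, so this is a genuine leading-order (relative-error) approximation. It is precisely the product mechanism of Proposition~\ref{pro2}, now realized by a single entangled two-qubit circuit.
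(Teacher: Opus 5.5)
Your proposal is correct, and it is essentially the computation behind the paper's observation. The paper gives no written derivation beyond Fig.~\ref{fig2}, but its subsequent remark that the state's coefficient determinant is $\frac{1}{2}\sin x\cos(y+\alpha)$ is exactly $c_xs_x(c'^2-s'^2)$ computed from your amplitude matrix, and the stated polynomials are the second-order expansions of Proposition~\ref{pro1} multiplied out. Your additional points are also correct and sharpen the paper's imprecise $\approx$: the $\alpha=0$ versus $\alpha=\pi$ equalities are exact, and the approximation can only mean a product of second-order factors (one can take $C=1/48$), because the quartic terms $\pm x^4/48$, $\pm y^4/48$ are dropped while $x^2y^2/16$ is kept.
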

\begin{figure}[htbp]
	\centerline{\includegraphics[scale=0.29]{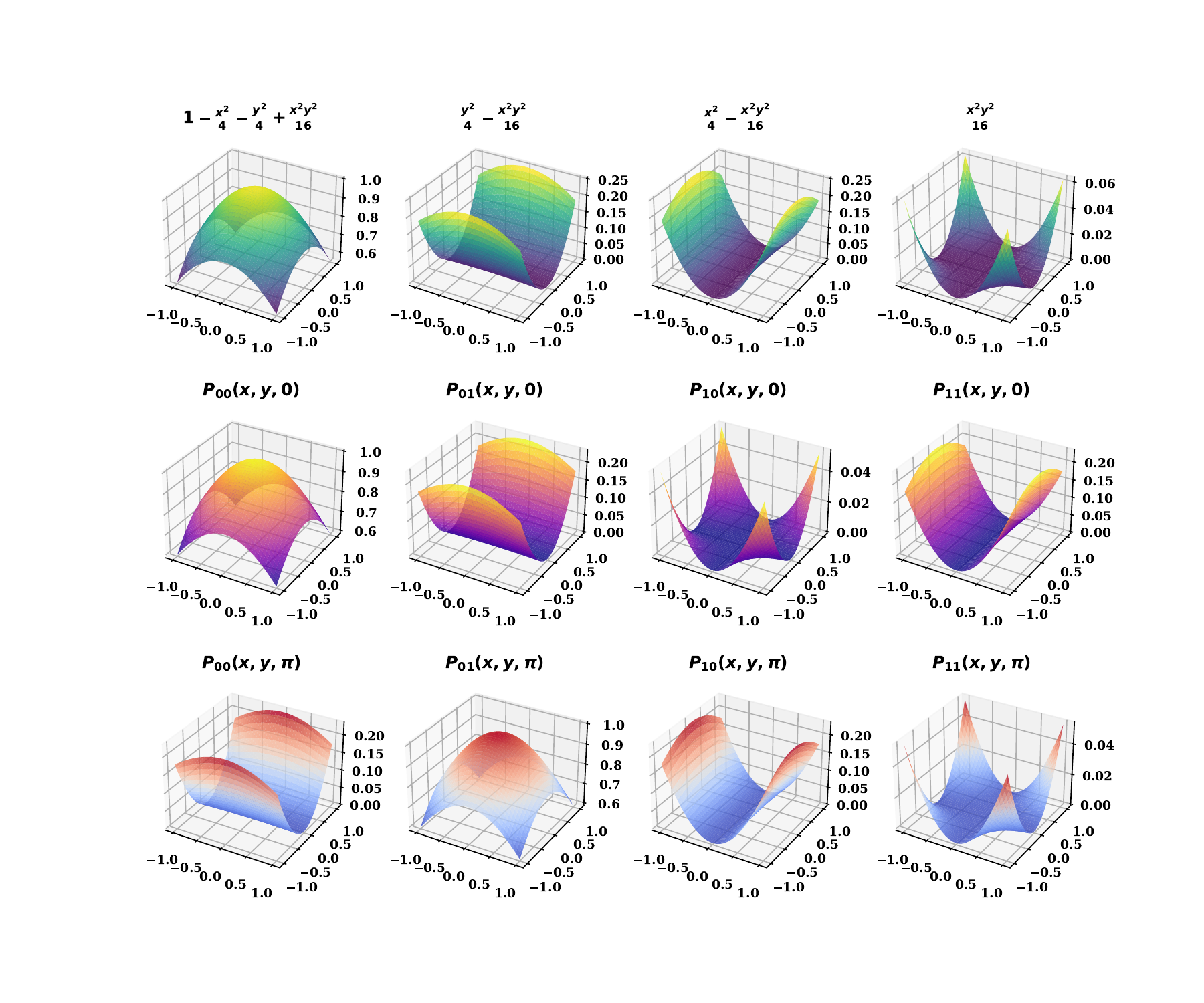}}
	\caption{The measurement probabilities of $\ket{\psi(x,y,\alpha)}$ in the computational basis states $\ket{ij}, i, j \in \{0,1\}$.}
	\label{fig2}
\end{figure}

In Observation~\ref{obs}, entanglement is preserved only when $R_Y(\alpha)$ precedes the CNOT, yielding a non-zero determinant $\frac{1}{2}\sin x \cos(y+\alpha)$. The only exceptions are when $\sin x = 0$ or $\cos(y+\alpha)=0$ (first qubit in a computational basis state).
%A direct calculation, in Observation \ref{obs}, shows that the position of the variational layer relative to the CNOT critically determines whether entanglement is preserved. When the CNOT is placed before the variational rotation $R_Y(\alpha)$, the final state remains separable. Conversely, placing $R_Y(\alpha)$ before the CNOT (as in the original ordering) yields a generically entangled state, as evidenced by a non-zero determinant $\frac{1}{2}\sin x \cos(y+\alpha)$. The only exceptions occur when $\sin x = 0$ or $\cos(y+\alpha)=0$, i.e., when the first qubit lies in a computational basis state, for example. 
%This observation underscores the importance of circuit architecture in exploiting quantum correlations for learning non-separable functions, emphasized by several authors \cite{havlivcek2019supervised, perez2020data} and "unpublished" \cite{ lloyd2020quantum}.

\section{Proposed HVQC Tools and Architecture}
\label{sec4}

Three design principles follow from Propositions~\ref{pro1} and~\ref{pro2}: (i)~\textbf{data 
re-uploading} — re-encoding $\bz'$ at each layer~\eqref{encoding} builds complex dependencies like $\beta^k z$ in \eqref{eq:6}; 
(ii)~\textbf{entanglement} — the cascade CNOT~\eqref{entangl} generalizes~\eqref{eq:14} to all feature dimensions; 
(iii)~\textbf{affine post-processing} —~\eqref{target} generalizes 
the rescaling $4\beta^{-2k}$ in~\eqref{eq:6} to arbitrary linear combinations, lifting the simplex constraint.

This section details the complete architecture of our hybrid variational quantum circuit.
%, specifying each component: data encoding, parameterized circuit, measurement, and affine post-processing.
\subsection{Hilbert space and data encoding}

Let \(\mathcal{H} = (\mathbb{C}^2)^{\otimes m_0}\) be the Hilbert space associated with the $m_0$-qubit system, of dimension \(\dim(\mathcal{H}) = 2^{m_0} \geq m\). A quantum state \(\ket{\psi} \in \mathcal{H}\) can be decomposed in the computational basis \(\{\ket{i}\}_{i=0}^{2^{m_0}-1}\) as:

\begin{equation}
	\ket{\psi} = \sum_{i=0}^{2^{m_0}-1} \alpha_i \ket{i}, \quad \sum_{i=0}^{2^{m_0}-1} |\alpha_i|^2 = 1
\end{equation}

The encoding of classical data is achieved by the mapping:

\begin{equation}
	F : \mathbb{R}^d \rightarrow \mathbb{R}^{m_0} \quad F(\mathbf{z}) = \mathbf{z}'
\end{equation}
which augments the $d$ components of $\bz$ to $m_0$ components of $\bz'$ (padding encoding)
with $$\mathbf{z}'=(z_1, z_2,\cdots, z_d, c_0, \ldots, c_{m_0 -d-1}) \in \mathbb{R}^{m_0}$$
and 
\begin{equation}
\label{encoding}
	\Phi : \mathbb{R}^{m_0} \rightarrow \mathcal{H}, \quad \Phi(\mathbf{z}') = \bigotimes_{k=1}^{m_0} R_Y(\mathbf{z}'_k)\ket{\psi}
\end{equation}

where \(R_Y(\theta) = e^{-i\frac{\theta}{2}Y}\) is the rotation operator about the Y-axis, whose matrix representation in the computational basis is:

\begin{equation}
	R_Y(\theta) = \begin{pmatrix} 
		\cos(\theta/2) & -\sin(\theta/2) \\
		\sin(\theta/2) & \cos(\theta/2)
	\end{pmatrix}
\end{equation}

It can be noted that the coefficients \(c_0, \ldots, c_{m_0 - d - 1}\) may be constants, or alternatively that \(\mathbf{z}' = \mathbf{z}^{\otimes m_0}\) for a suitably chosen integer \(m_0\), as detailed in reference \cite{schuld2020circuit} (tensor product encoding). Furthermore, the components of \(\mathbf{z}'\) may also depend on the \(z_i\) (non-linear feature encoding in Fig.~\ref{fig3}); in this case, it is shown that this can improve learning (Table~\ref{tab1}).

\subsection{Architecture of the parameterized circuit}

The quantum circuit implements a parameterized unitary transformation \(U(\bz'; \bm{\theta}) : \mathcal{H} \rightarrow \mathcal{H}\). Let $L$ be the number of layers. The architecture alternates encoding layers \(\Phi(\mathbf{z}')\) and strongly entangling layers \(V(\bm{\theta}_\ell)\) 
\begin{align}
	\label{architecture}
        \nonumber
	U(\bz'; \bm{\theta}) = V(\bm{\theta}_L) \circ \Phi(\mathbf{z}') \circ V(\bm{\theta}_{L-1}) \circ \Phi(\mathbf{z}') \circ \cdots 
        &\\ \circ V(\bm{\theta}_1) \circ \Phi(\mathbf{z}') \circ V(\bm{\theta}_0)
\end{align}

\subsubsection{Strongly entangling layer}

For the $\ell$-th layer, \(V(\bm{\theta}_\ell)= U_{\text{ent}} \cdot R(\bm{\theta}_\ell)\), with the order suggested by Observation~\ref{obs} decomposes into two parts:

\begin{itemize}
	\item \textbf{Local rotations} on each qubit:
	\begin{equation}
		R(\bm{\theta}_\ell) = \bigotimes_{k=1}^{m_0}
		R_X(\theta_{\ell,k}^{(1)}) R_Y(\theta_{\ell,k}^{(2)}) R_Z(\theta_{\ell,k}^{(3)})
	\end{equation}
	
	\item \textbf{Entangling CNOT gates} in a cascade pattern:
	\begin{equation}
    \label{entangl}
		U_{\text{ent}} = \prod_{i=0}^{m_0-1}	CNOT_{i,(i+r)\bmod m_0}.
	\end{equation}
\end{itemize}
with
    \begin{align}
    \nonumber
        CNOT_{i,j}|q_0 \dots q_i \dots q_j \dots q_{m_0-1}\rangle
	 \rightarrow \\
	|q_0 \dots q_i \dots (q_j \oplus q_i) \dots q_{m_0-1}\rangle
    \end{align}
and $r$ is a hyperparameter called the range which defines the distance between the control qubit $i$ and the target qubit $j$.

\begin{figure}[htbp]
\centering
\includegraphics[width=0.65\linewidth]{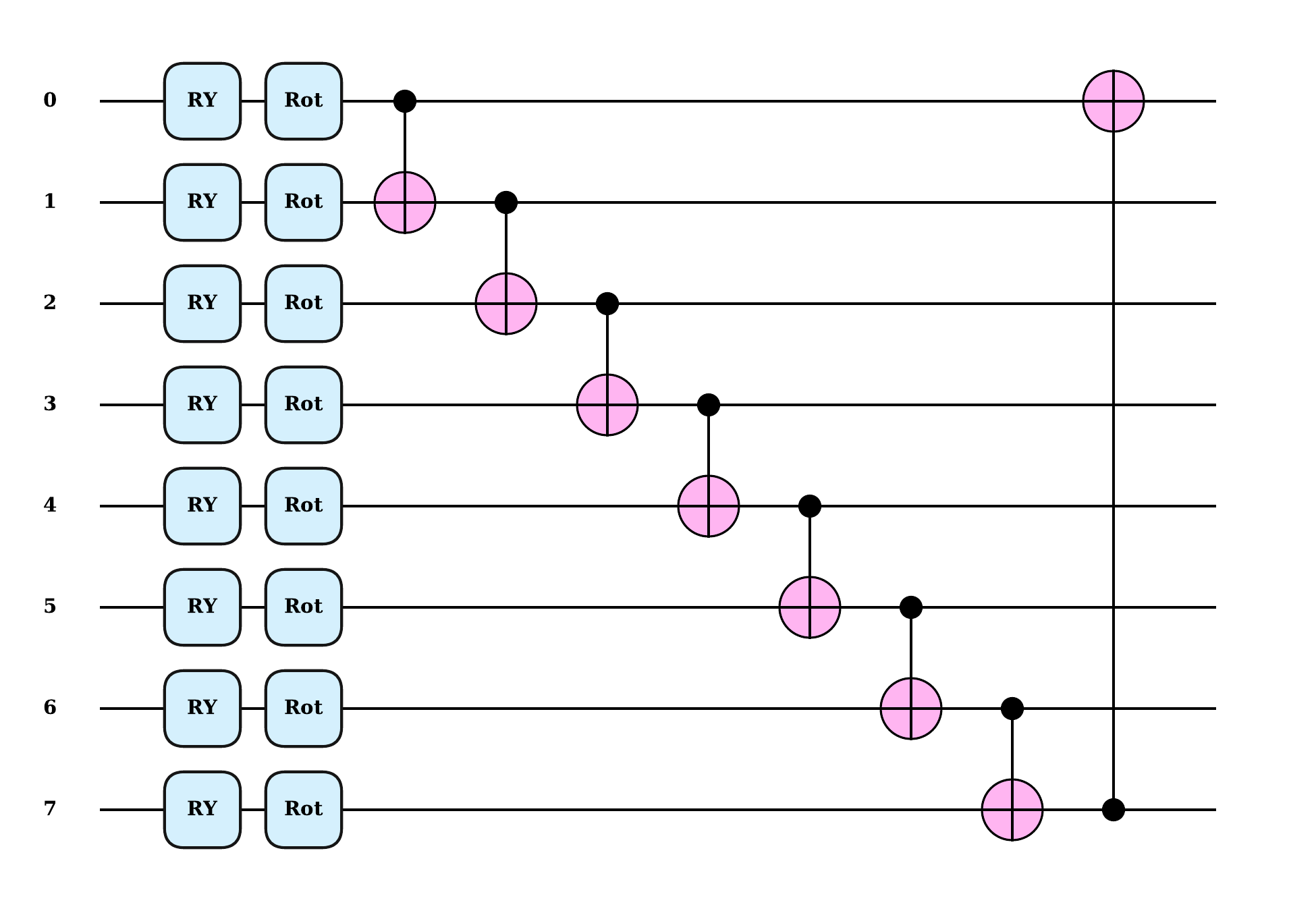}
%\hfill \quad
%\includegraphics[width=0.65\linewidth]{VQC_F3_Data1.pdf}

\caption{Architecture of a single layer of the VQC. $Rot$ denotes $R_X(\theta_{\ell,k}^{(1)}) R_Y(\theta_{\ell,k}^{(2)}) R_Z(\theta_{\ell,k}^{(3)})$.}
\label{fig3}
\end{figure}
\subsection{Measurement and probability distribution}

Measurement in the computational basis yields a probability vector \(\mathbf{p}(\mathbf{z}'; \bm{\theta}) \in \mathbb{R}^{2^{m_0}}\) belonging to the simplex \(\Delta^{2^{m_0}-1}\)
Each component is expressed as:
\begin{equation}
    \label{eq35}
		p_j(\mathbf{z}'; \bm{\theta}) = \text{Tr}\left( \ket{j}\bra{j} \, \rho(\mathbf{z}'; \bm{\theta}) \right)
\end{equation}
where $\rho(\mathbf{z}'; \bm{\theta}) = U(\bz'; \bm{\theta})\ket{0^{\otimes m_0}}\bra{0^{\otimes m_0}}U^\dagger(\bz'; \bm{\theta})$ is the density matrix of the final state. 
Thus, the circuit \(U(\bz'; \bm{\theta})\) constructs a family of functions of the form \eqref{eq35},
which, according to the universal approximation theorem \cite{brylinski2002universal}, can be uniformly approximated on compact sets by single-qubit gates and a CNOT gate, provided the depth \(L\) is sufficient.

A classical post-processing (an affine transformation) projects the probability space \(\mathbb{R}^{2^{m_0}}\) onto the target space \(\mathbb{R}^{m}\) 
\begin{equation}
\label{target}
%		h: \mathbb{R}^{2^{m _0}} \rightarrow \mathbb{R}^{m}, \quad 
f_{\bm{\Theta}}(\bz) = \mathbf{W} \mathbf{p}(F(\mathbf{z}); \bm{\theta}) + \mathbf{b}
\end{equation}
with $\bm{\Theta} = \{\bm{\theta}=(\bm{\theta}_0, \cdots, \bm{\theta}_L), \mathbf{W}\in \mathbb{R}^{m \times 2^{m_0}}, \bm{b}\in \mathbb{R}^{m}\}$.
The cost function \eqref{eq2} admits a variational interpretation as the expectation of the reconstruction error under the empirical data distribution:
\begin{equation}
	\mathcal{L}(\bm{\Theta}) = \mathbb{E}_{(\mathbf{z},\mathbf{x}) \sim \mathcal{D}} \left[ \| \mathbf{x} - (W\mathbf{p}(F(\mathbf{z});\bm{\theta}) + \mathbf{b}) \|_2^2 \right]
\end{equation}

\subsection{Gradient computation}

The gradient with respect to the classical parameters \((W, \mathbf{b})\) is computed via standard automatic differentiation. For the quantum parameters \(\bm{\theta}\), we use the \textit{parameter-shift rule}

\begin{equation}
	\frac{\partial p_j}{\partial \theta_i} = \frac{1}{2} \left( p_j(\theta_i + \tfrac{\pi}{2}) - p_j(\theta_i - \tfrac{\pi}{2}) \right)
\end{equation}
This property follows from the structure of rotation gates and enables an exact analytical computation of the gradients.
Optimization is performed using the Adam (Adaptive Moment Estimation) algorithm.

\section{Results on simulation}

\label{sec5}

\subsection{Synthetic dataset}

To evaluate our HVQC to reconstruct images from input variables, We consider two synthetic dataset (Fig.~\ref{fig4} and \ref{fig5}) composed of $N=900$ samples $(\mathbf{z}^{(i)},\tilde{I}^{(i)})$. Each input variable $\mathbf{z}^{(i)}=(z_1^{(i)},z_2^{(i)})\in\mathbb{R}^2$ parametrizes an image $\tilde{I}^{(i)}$ defined over a discrete spatial grid. The output image  has resolution $16\times16$.% and is represented as a vector in the probability simplex $\Delta^{255}\subset\mathbb{R}^{256}$.

\paragraph{Dataset 1}
The spatial grid is given by two uniform subdivisions \(x_k\) and \(y_l\) of \([-2, 2]\) into 16 points each. Latent variables are sampled from the uniform distribution on \([-1,1]^2\). For each \(\mathbf{z} = (z_1, z_2)\), two vectors are defined on the grid
\begin{align}
v_1(k) = \sin(z_1 g_k)+\cos(z_2 g_k), \
v_2(l) = \cos(z_1 g_l)+\sin(z_2 g_l).
\end{align}

The image is constructed as 
\begin{equation}
\tilde{I}_{\mathbf{z}}(x_k,y_l) =
\frac{|I_{\mathbf{z}}(x_k,y_l)|}
{\sum_{k,l}|I_{\mathbf{z}}(x_k,y_l)|}, \ \  I_{\mathbf{z}}(x_k,y_l) = v_1(k) v_2(l).
\end{equation}
%with $I_{\mathbf{z}}(x_k,y_l) = v_1(k) v_2(l)$.
 
\paragraph{Dataset 2}

Input variables are sampled from $[-2,2]^2$. For a given $\mathbf{z}=(z_1,z_2)$, three spatial patterns are defined:
\begin{align}
h_1(x,y) &= \exp\!\left(-\frac{(x-z_1)^2 + (y-z_2)^2}{2}\right), \\
h_2(x,y) &= \sin(z_1 x + z_2 y), \\
h_3(x,y) &= \exp\!\left(-\frac{(r-\sqrt{z_1^2+z_2^2}/2)^2}{0.5}\right),
\end{align}
where $r=\sqrt{x^2+y^2}$. The resulting image is given by the mixture
\begin{equation}
I_{\mathbf{z}}(x,y) = \alpha h_1(x,y) + \beta h_2(x,y) + (1-\alpha-\beta) h_3(x,y),
\end{equation}
with $\alpha = 0.5 + 0.5 \tanh(z_1)$ and $\beta = 0.5 + 0.5 \tanh(z_2).$
%\begin{equation}
%\alpha = 0.5 + 0.5 \tanh(z_1), \qquad
%\beta = 0.5 + 0.5 \tanh(z_2).
%\end{equation}
As before, the image is normalized.
\paragraph{Dataset 3}
The Friedman1 dataset is a standard synthetic regression benchmark defined by
$y = 10\sin(\pi x_1 x_2) + 20(x_3 - 0.5)^2 + 10x_4 + 5x_5 + \varepsilon$,
with 5 input features uniformly drawn from $[0,1]$ and additive gaussian noise
$\varepsilon \sim \mathcal{N}(0,1)$.
\subsection{Experimental setup}
All experiments were conducted on a simulated quantum environment using PennyLane, with classical components implemented in Tensorflow. The training procedure followed a supervised learning framework as defined, with the Adam optimizer and a learning rate of $10^{-2}$.

For the experiment, we aim to evaluate the impact of the mappings $\bz\mapsto F_1(\bz)=\bz^{\otimes^3}$, $\bz \mapsto F_2(\bz)$ 
where $F_2(\bz)=\mathbf{z}'^{(i)} \in \mathbb{R}^8$ is defined by $F_2(z_1, z_2)=(z_1, z_2, z_1z_2, z_2z_1, z_1^2, z_2^2, z_1^2z_2, z_2^2z_1)$, and $\bz \mapsto F_3(\bz)$ 
where $F_3(\bz)=\mathbf{z}'^{(i)} \in \mathbb{R}^8$ is defined by $F_3(z_1, z_2)=(z_1, z_2, 0, 0, 0, 0, 0, 0)$.

To benchmark the HVQC with 66,032 trainable parameters (240 quantum, 65,792 classical), we selected four classical regression models: 
GPR with an RBF kernel, alpha 0.1, 10 optimizer restarts, and target normalization ; RFR with 100 estimators and default unlimited depth ; and XGB with 100 estimators, maximum depth 6, and learning rate 0.1 ; and two feedforward neural networks with identical architectures (5 hidden layers, ~67,280 parameters) but different activation functions (ReLU and SiLU), trained with the Adam optimizer for up to 1000 epochs.
As noted in Section~\ref{sec3}, all experiments are conducted using statevector simulation, without shot noise.
The impact of shot noise and hardware noise on regression  performance is left for future work.
%All experiments use statevector simulation, providing exact probability estimates without shot noise. This idealized setting avoids the $O(1/\sqrt{S})$ statistical fluctuations inherent to finite sampling, and results should be interpreted accordingly; the impact of shot noise and hardware noise on regression performance is left for future work.
\begin{figure}[htbp]
		\centerline{\includegraphics[scale=0.17]{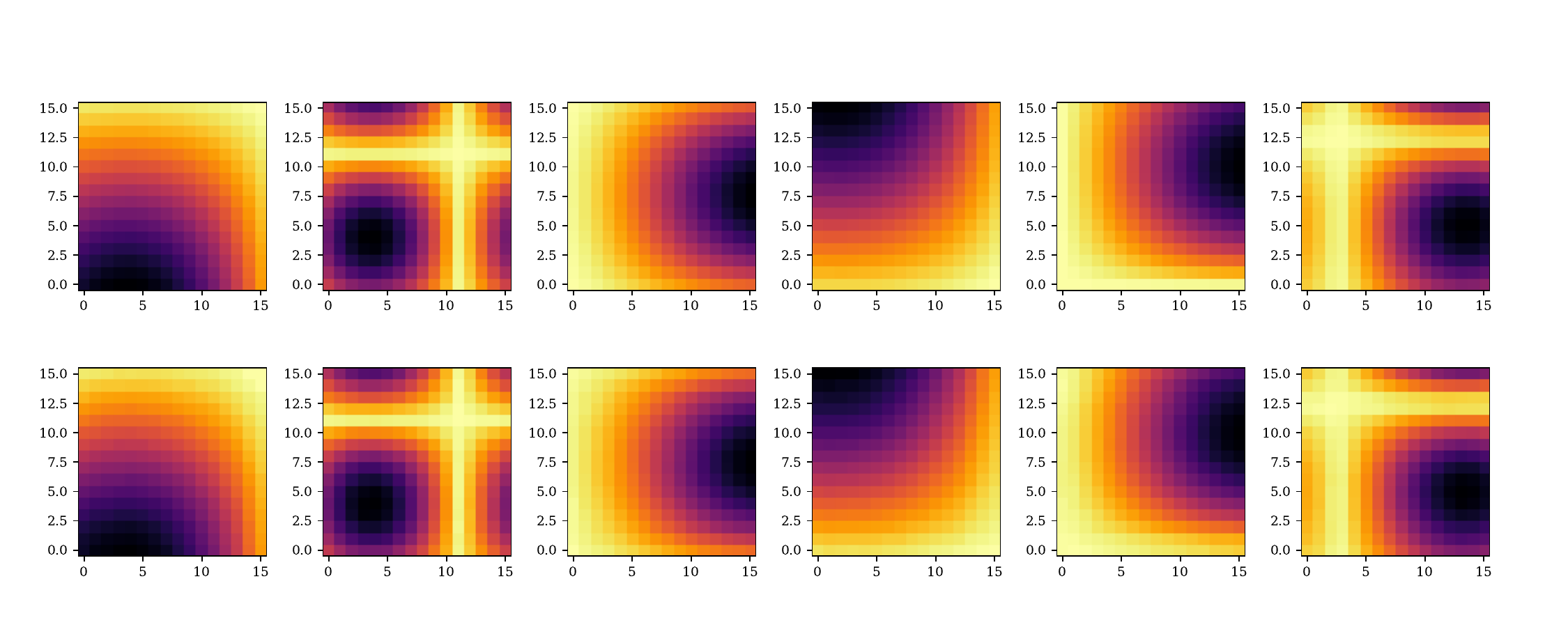}}
		\caption{Dataset 1: Reconstruction of $16\times 16$ maps by the variational quantum circuit whose architecture is described in \eqref{architecture} with number of layers $L=9$.}
		\label{fig4}
\end{figure}
\begin{figure}[htbp]
		\centerline{\includegraphics[scale=0.17]{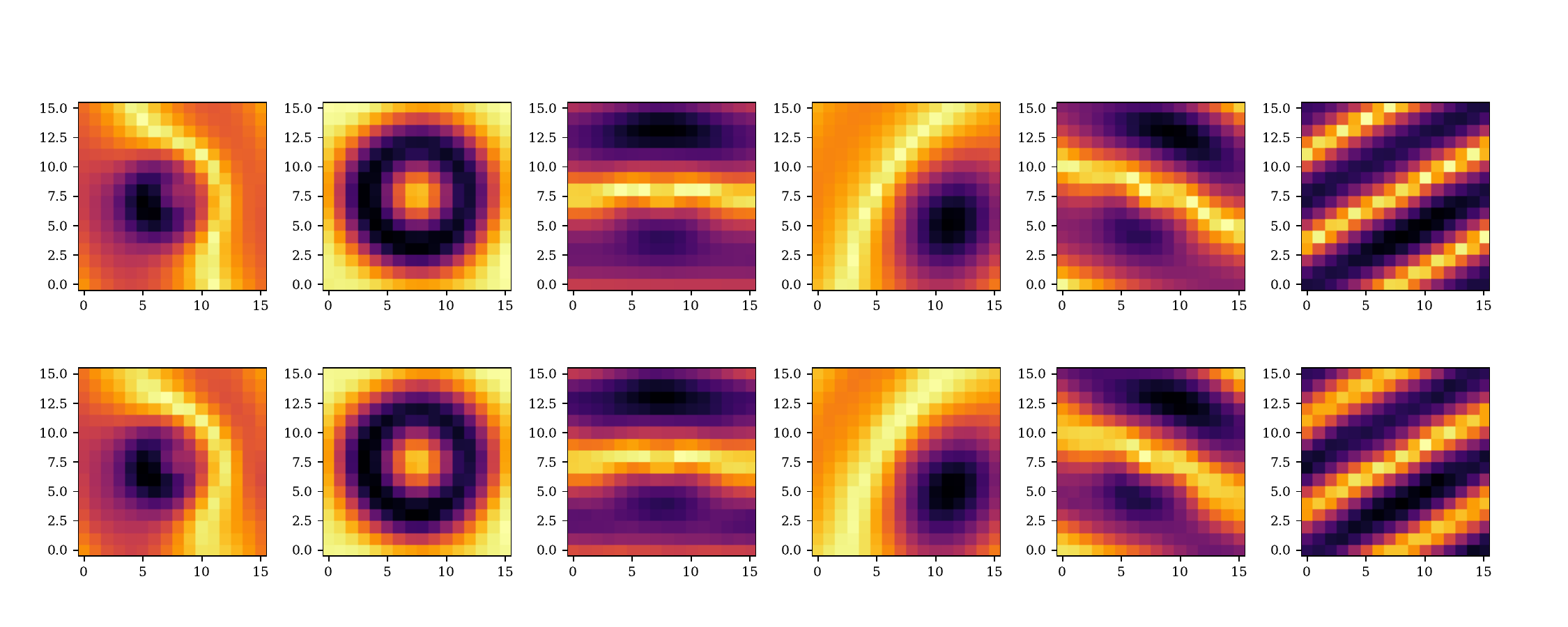}}
		\caption{Dataset 2:Reconstruction of $16\times 16$ maps by the variational quantum circuit whose architecture is described in \eqref{architecture} with number of layers $L=9$.}
		\label{fig5}
\end{figure}

\begin{figure}[htbp]
\centerline{\includegraphics[scale=0.3]{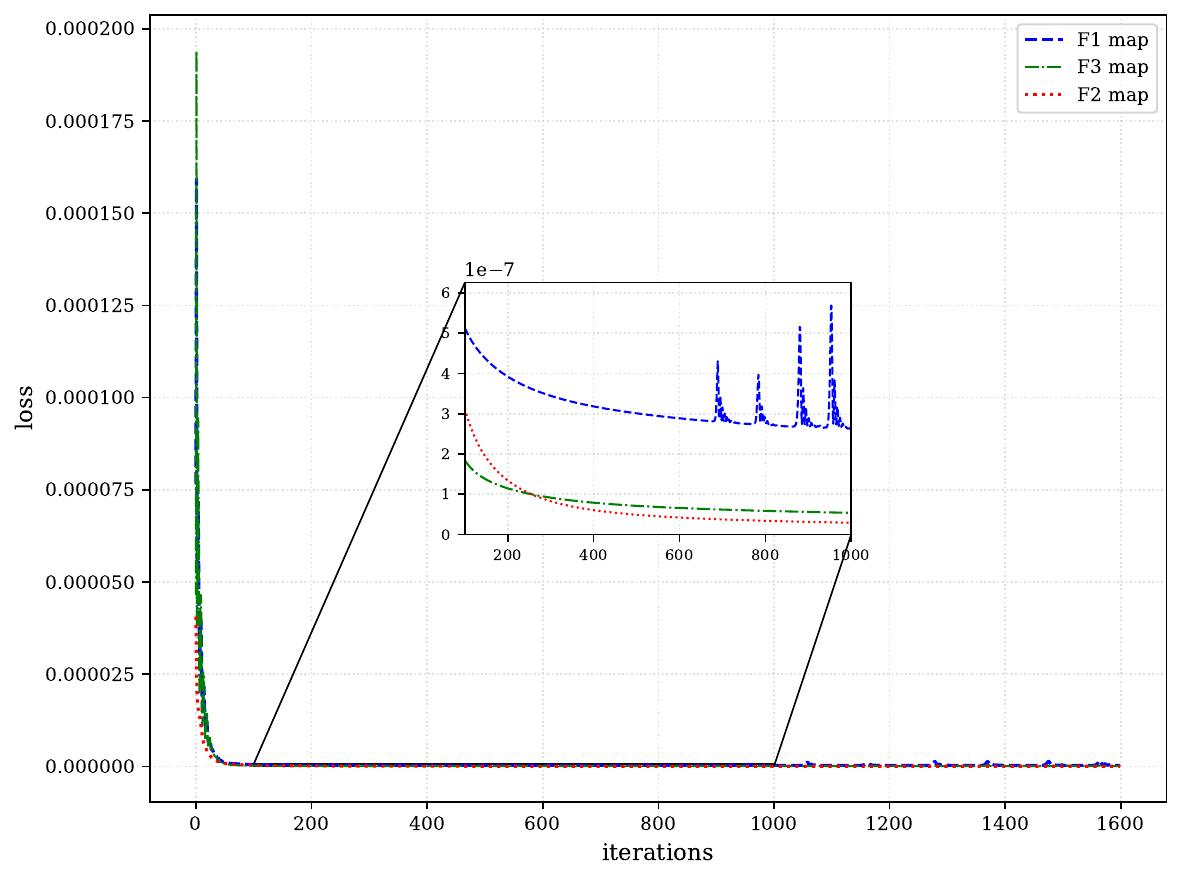}}
		\caption{Learning curve for map reconstruction by the variational quantum circuit \eqref{architecture} with $\bz'=\bz^{\otimes^3}$ in blue and $\bz'=F_1(\bz)$ in red ($L=9$) for dataset 1.}
		\label{fig6}
\end{figure}
\begin{figure}[htbp]
\centerline{\includegraphics[scale=0.3]{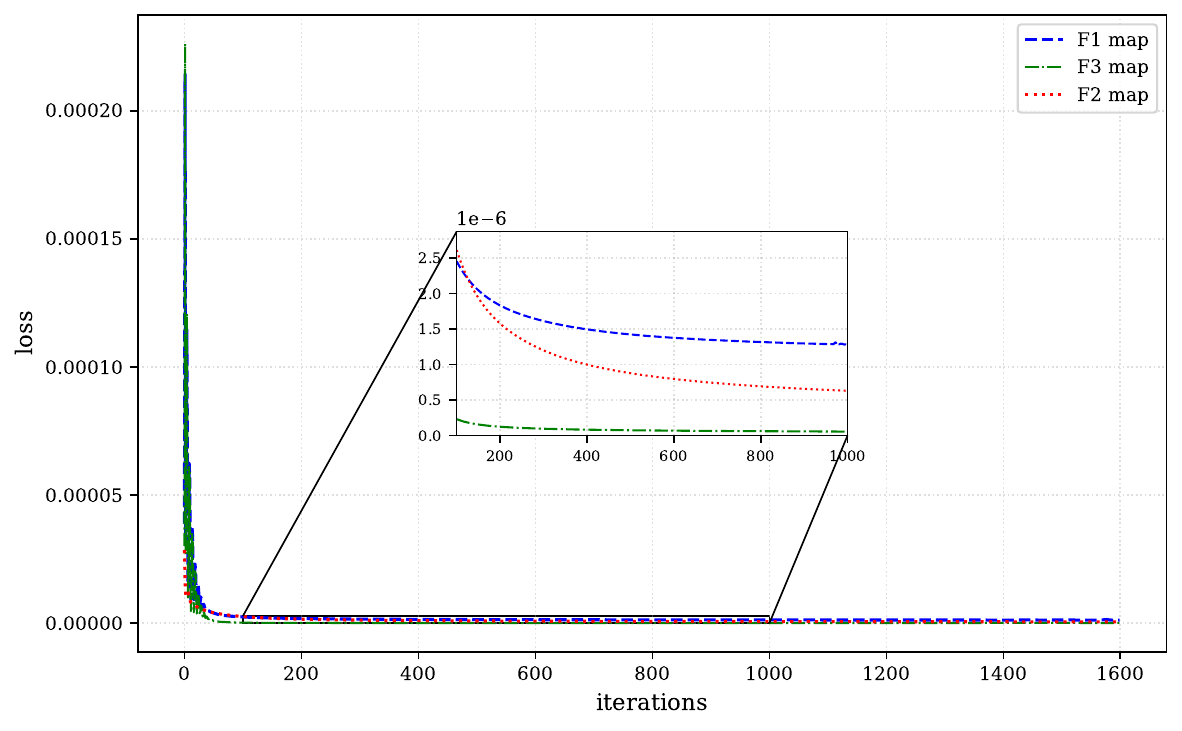}}
		\caption{Learning curve for map reconstruction by the variational quantum circuit \eqref{architecture} with $\bz'=\bz^{\otimes^3}$ in blue and $\bz'=F_1(\bz)$ in red ($L=9$) for dataset 2.}
		\label{fig7}
\end{figure}
\begin{table}[htbp]
		\caption{Comparison of prediction performance for HVQC models with different feature maps and classical (GPR, RFR and XGB)
        across two datasets, using MSE and $R^2$ metrics on train and test splits ( $m_0=2$ qubits and $L=9$ layers).}
		\begin{center}
			\begin{tabular}{|c|c|cc|cc|}
				\hline
				&\textbf{}&\multicolumn{2}{|c|}{\textbf{Dataset 1}}&\multicolumn{2}{|c|}{\textbf{Dataset 2}} \\
                
				\cline{1-6} 
				\textbf{Models}&\textbf{Metrics} & \textbf{\textit{train}}& \textbf{\textit{test}}& \textbf{\textit{train}}& \textbf{\textit{test}} \\
				\cline{1-6} 
				\textbf{HVQC}&\textbf{mse}($\times 10^{-7}$)&2.55 & 3.06 & 5.151&14.398\\
                \cline{2-6} 
				with $F1$ &R$^2$& 0.970& 0.965 & 0.909& 0.762\\
				\cline{1-6} 
                \textbf{HVQC}&\textbf{mse}($\times 10^{-7}$)& $\mathbf{0.218}$& $\mathbf{0.336}$&  1.737&10.262\\
                \cline{2-6} 
				with $F2$ &R$^2$&$\mathbf{0.997}$ & $\mathbf{0.996}$& 0.970& 0.843\\
				\cline{1-6} 
                \textbf{HVQC}&\textbf{mse}($\times 10^{-7}$)& 0.609& 0.745& 0.701 &1.082\\
                \cline{2-6} 
				with $F3$ &R$^2$& 0.991& 0.990&  $\mathbf{0.986}$&$\mathbf{0.978}$\\
                \cline{1-6} 
				\textbf{GPR}&\textbf{mse}($10^{-7}$)& 0.605& 0.788& $\mathbf{0.683}$ &$\mathbf{1.075}$\\
                \cline{2-6} 
				     &R$^2$& 0.993& 0.991&  $\mathbf{0.986}$&$\mathbf{0.978}$\\
                \cline{1-6} 
				\textbf{RFR}&\textbf{mse}($10^{-7}$)& 0.102& 0.782& 0.238 &1.654\\
                \cline{2-6} 
				     &R$^2$& 0.998& 0.995&  0.991&0.968\\
                \cline{1-6} 
				\textbf{XGB}&\textbf{mse}($10^{-7}$)& 0.740& 1.797& 1.944 &4.751\\
                \cline{2-6} 
				     &R$^2$& 0.990& 0.979&  0.960&0.905\\
				
				\cline{1-6} 
			\end{tabular}
			\label{tab1}
		\end{center}
	\end{table}

\subsection{Impact of feature map, ablation study on hybrid architecture, and benchmark against classical models}

The results in Table~\ref{tab1} reveal that the choice of feature map critically influences the predictive performance of the HVQC. Feature map $F_1$ contains redundant components, which degrades its performance. Feature map $F_2$ shows mixed results, performing well on some datasets but poorly on others. In contrast, feature map $F_3$ which retains only the original components and pads the remaining entries with zeros consistently achieves strong predictive performance. Although the affine layer dominates the parameter count (65,792 vs.~240 quantum parameters), the ablation study (Table~\ref{tab:ablation}) shows that removing the quantum circuit reduces the test $R^2$ from $0.978$ to $0.325$ (F3, Dataset~2). Conversely, removing the affine layer (VQC-only) yields negative $R^2$ values, confirming that the final linear projection is equally indispensable to lift the probability simplex constraint.
%This success stems from the hybrid architecture of the HVQC: neither the quantum circuit alone (without post-processing) nor the classical affine layer alone (without the quantum circuit) yields satisfactory results, as demonstrated in our ablation  (Table~\ref{tab:ablation}).

Compared to classical baselines, the HVQC with $F_3$ performs on par with Gaussian Process Regression. Random Forest achieves good performance on Dataset~1 but degrades slightly on Dataset~2, while XGBoost consistently lags behind the other methods. HVQC achieves the best performance ($0.938$) on the Friedman1 benchmark compared to XGB, RFR, and GPR (Table~\ref{tab:friedman1_results}).

\begin{table}[htbp]
\centering
\caption{Ablation study: $R^2$ scores on Dataset~1 (left) and Dataset~2 (right) 
for three configurations: VQC-only (quantum circuit without affine layer), 
Affine-only (classical affine layer applied directly to the encoded input, 
without quantum processing), and Full HVQC. FM denotes feature map.}
\label{tab:ablation}
\begin{tabular}{l l c c c c}
\hline
\multirow{2}{*}{Configuration} & \multirow{2}{*}{FM}& \multicolumn{2}{c}{$R^2$ (dataset 1)} & \multicolumn{2}{c}{$R^2$ (dataset 2)} \\
\cline{3-6}
& & Train & Test& Train & Test \\
\hline
\multirow{3}{*}{VQC-only} & F1  &$-0.747$&$-0.692$& $-1.881$ & $-2.053$ \\
& F2  &$-1.373$&$-1.387$& $-2.244$ & $-2.409$ \\
& F3  &$-0.372$&$-0.348$& $-1.383$ & $-1.512$ \\
\hline
\multirow{3}{*}{Affine-only} & F1 &$0.520$&$0.532$& $0.075$ & $0.151$ \\
& F2 &$0.858$&$0.857$& $0.703$ & $0.651$ \\
& F3 &$0.638$&$0.664$& $0.364$ & $0.325$ \\
\hline
\multirow{3}{*}{Full HVQC} & F1 & $\mathbf{0.970}$ & $\mathbf{0.965}$ & $\mathbf{0.909}$ & $\mathbf{0.762}$ \\
& F2 & $\mathbf{0.997}$ & $\mathbf{0.996}$ & $\mathbf{0.970}$ & $\mathbf{0.843}$ \\
& F3 & $\mathbf{0.991}$ & $\mathbf{0.990}$ & $\mathbf{0.986}$ & $\mathbf{0.978}$ \\
%\multirow{3}{*}{Full HVQC} & F1 &&& $0.909$ & $0.762$ \\
%& F2 &&& $0.970$ & $0.843$ \\
%& F3 &&& $0.986$ & $0.978$ \\
\hline
\end{tabular}
\end{table}

%\noindent The VQC-only configuration fails completely ($R^2 < -1.38$), confirming that the affine layer is necessary for proper geometric projection; the affine-only baseline achieves moderate performance ($R^2 \leq 0.65$), while the full HVQC dramatically outperforms it, with the most striking improvement occurring for feature map F3 ($R^2$ rises from $0.325$ to $0.978$), proving that the quantum circuit learns expressive non-linear features from poor linear inputs.

Two classical neural networks (ReLU and SiLU), with ~67,280 parameters, were trained for comparison. At 100 epochs, they achieve slightly lower performance: for dataset 1, $R^2 \approx 0.994$ (both); for dataset 2, $R^2 \approx 0.975$ (SiLU) and $0.972$ (ReLU). After 1000 epochs, both reach $R^2 \approx 0.999$ (dataset 1) and $0.993$ (dataset 2), matching HVQC performance.
\begin{table}[htbp]
\centering
\caption{XGB, RFR and GPR are evaluated on the Friedman1 dataset using 20 independent splits (200 train / 40\,568 test) with 10-fold cross-validation for hyperparameter selection. For HVQC ( $m_0=5$ qubits and $L=3$ layers, without feature map).}
\label{tab:friedman1_results}
\begin{tabular}{lc}
\hline
\textbf{Method} & \textbf{R$^2$ on test set} \\
\hline
HVQC      & $\mathbf{0.938}$ \\
XGB   & $0.858$ \\
RFR & $0.790$ \\
GPR& $0.934$\\
\hline
\end{tabular}
\end{table}
%Thus, while classical networks can ultimately match or exceed HVQC, they require longer training. HVQC, especially with expressive feature maps like F3 (Fig.~\ref{fig6} and \ref{fig7}) shows better learning efficiency and strong generalization with limited overfitting. Overall, the feature map’s expressivity is more influential than the architecture itself, and hybrid quantum-classical models offer a competitive alternative to classical approaches, particularly when inductive bias or robustness matters.
%To better understand the inner workings of the HVQC, we now analyze the quantum state immediately after measurement, before the final affine transformation.

\subsection{Global analysis of quantum state after measure and implicit clustering behavior}

\begin{figure}[htbp]
	\centerline{\includegraphics[scale=0.24]{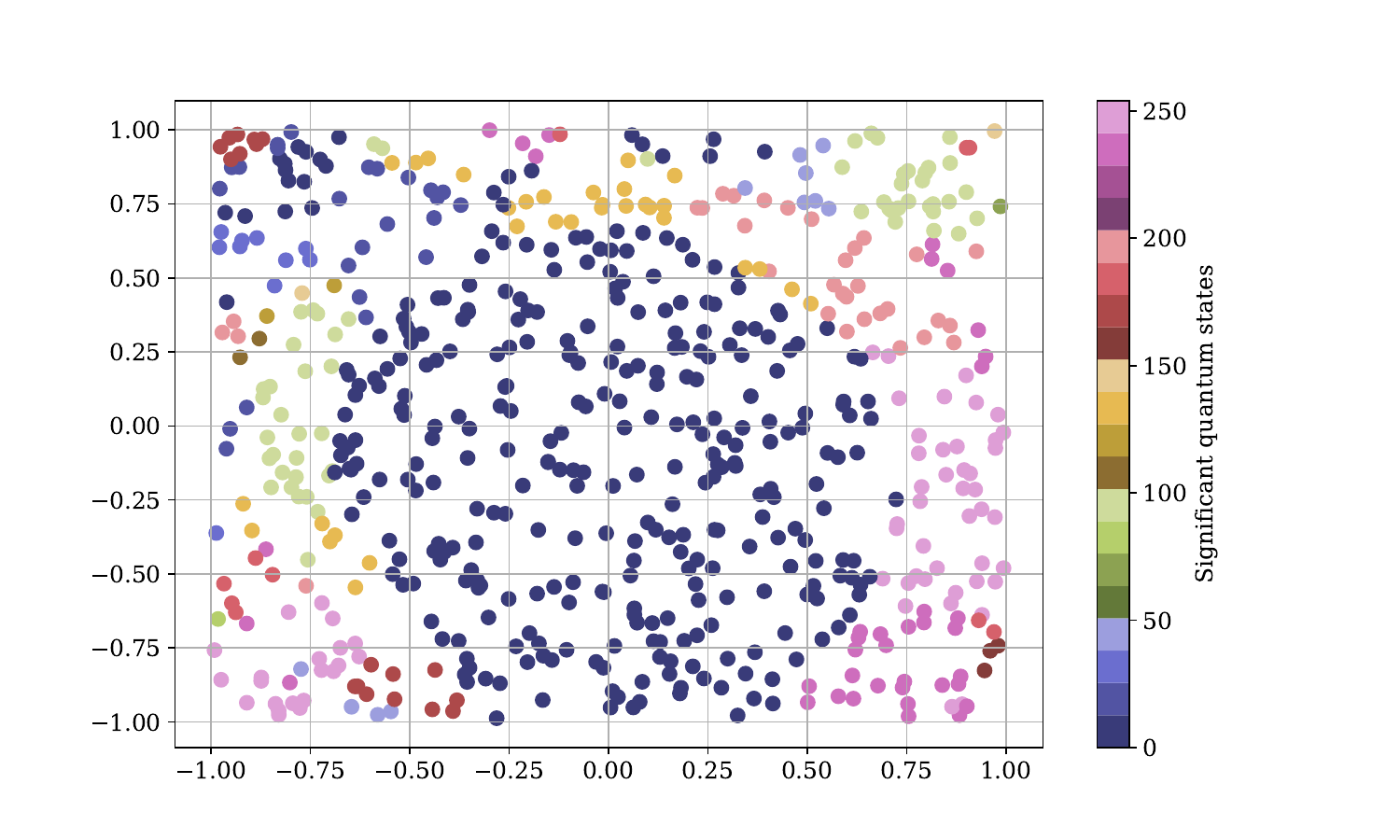}}
	\caption{Emergent activation patterns from HVQC output probabilities before post-processing, on training set of dataset 1.}
	\label{fig8}
\end{figure}

Although the significant states (Fig.~\ref{fig8}) obtained immediately after quantum measurements on all VQC output qubits, representing the hottest pixels, do not form sharply separated clusters according to classical metrics, they reveal a clear overall tendency to group images by category (Fig.~\ref{fig9}). Thus, the VQC implicitly reflects the distribution of threshold-activated elements, facilitating a structured interpretation of the identification of significant pixels across the entire training set of dataset 1.

\begin{figure}[htbp]
	\centerline{\includegraphics[scale=0.3]{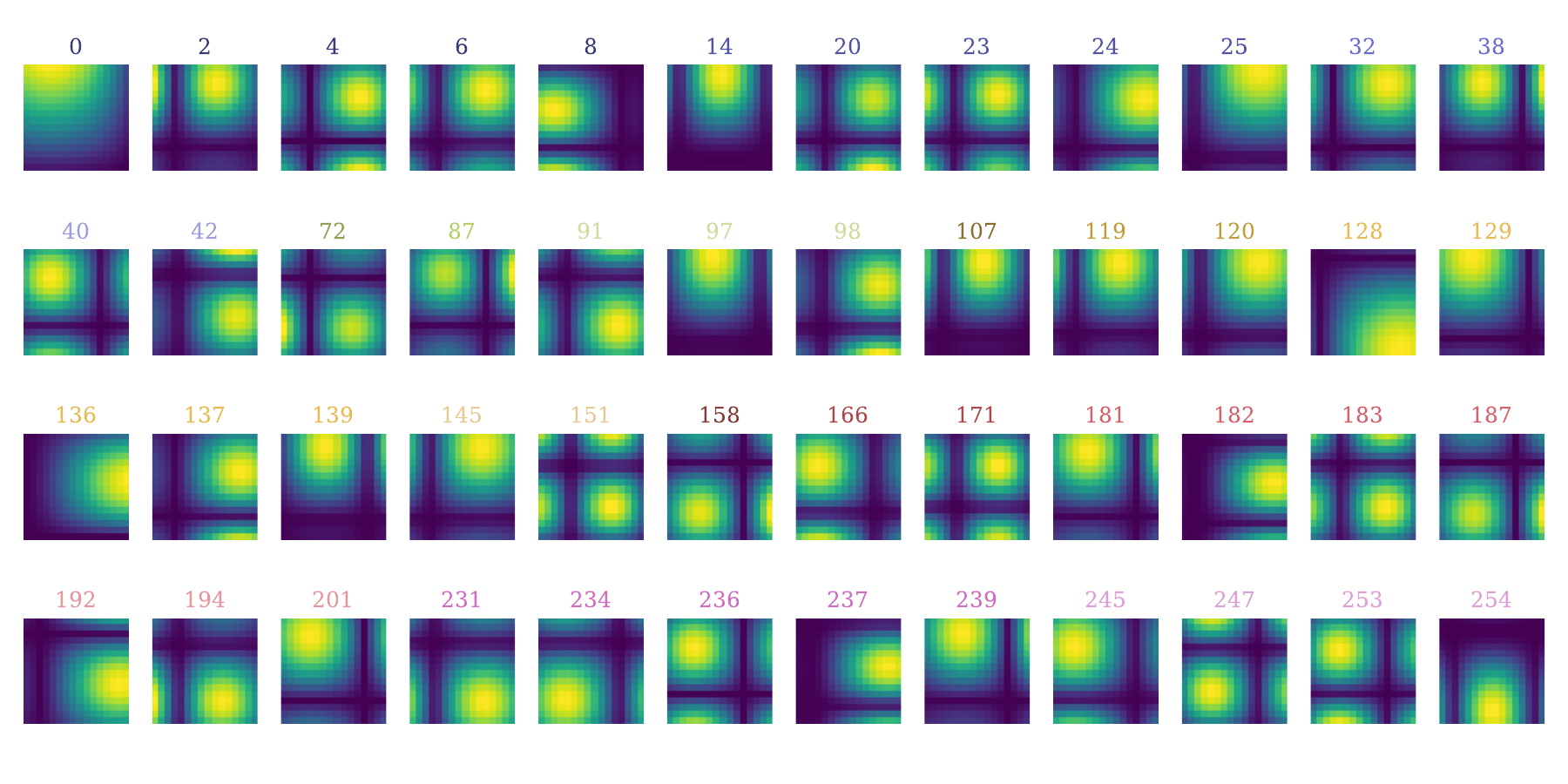}}
	\caption{Sample images from each of 48 clusters of significant quantum state, on training set of dataset 1.}
	\label{fig9}
\end{figure}
\section{Conclusion}
\label{sec6}

We proposed a hybrid variational quantum circuit (HVQC) for 
multivariate regression, combining a parameterized quantum circuit 
with a classical affine post-measurement layer that lifts the 
probability simplex constraint. %Theoretical results establish a formal link between elementary circuit expressivity and the full architecture via data re-uploading, entanglement, and affine post-processing. 
An ablation study confirms that 
both the quantum and classical components are essential, and that 
the architecture follows naturally from elementary circuits via 
data re-uploading, entanglement, and affine post-processing.
%The proposed architecture is derived from the elementary circuit via data re-uploading, entanglement, and affine post-processing.
%An ablation study confirms that both components are essential.
Experimentally, the HVQC matches Gaussian Process Regression on 
synthetic image datasets and achieves $R^2=0.938$ on the larger 
Friedman1 benchmark (40,568 test samples), outperforming XGB and RFR. 
Results highlight the central role of the feature map, whose 
expressivity proves more influential. Future work includes validation on real-world datasets, 
analysis of shot noise and hardware effects, and principled feature 
map selection strategies.
These results suggest that HVQC architectures are a viable 
alternative to classical methods for structured regression tasks, 
particularly when the output space is high-dimensional.

	\bibliographystyle{IEEEtran}
\bibliography{IEEEabrv,ref}
	
\end{document}